\documentclass[11pt]{article}

\usepackage{amsthm}

\newtheorem{theorem}{Theorem}
\newtheorem{lemma}{Lemma}
\newtheorem{definition}{Definition}

\usepackage[utf8]{inputenc}
\usepackage[T1]{fontenc}
\usepackage{lmodern}

\usepackage{amsmath, amssymb}
\usepackage{bm}
\usepackage{mathtools}

\usepackage{graphicx}
\usepackage{booktabs}
\usepackage{array}
\usepackage{float}
\usepackage[font=small,labelfont=bf]{caption}

\usepackage{geometry}
\usepackage{tcolorbox}

\usepackage{abstract}

\usepackage[numbers,sort&compress]{natbib}

\usepackage{titlesec}
\titleformat{\section}{\large\bfseries}{\thesection.}{0.5em}{}
\titleformat{\subsection}{\normalsize\bfseries}{\thesubsection}{0.5em}{}
\titlespacing*{\section}{0pt}{1.5ex plus .2ex}{1ex}
\titlespacing*{\subsection}{0pt}{1.25ex plus .2ex}{0.75ex}

\usepackage[colorlinks=true,
            linkcolor=blue,
            citecolor=blue,
            urlcolor=blue]{hyperref}

\usepackage{titling}
\title{\textbf{Stream Neural Networks:\\
Epoch-Free Learning with Persistent Temporal State}\thanks{\textbf{Code availability:} Reference implementation at \href{https://github.com/pathan-amama/StNN-Stream-Neural-Networks}{https://github.com/pathan-amama/StNN-Stream-Neural-Networks}.}}

\author{
AMAMA PATHAN\\
\vspace{0.3em}
\textit{BLACKLOOP Research (Independent Research)}\\
\texttt{amamapathan@proton.me}
}

\date{2026}

\begin{document}
\maketitle


\begin{abstract}
Most contemporary neural learning systems rely on epoch-based optimization and repeated access to historical data, implicitly assuming reversible computation. In contrast, real-world environments often present information as irreversible streams, where inputs cannot be replayed or revisited. Under such conditions, conventional architectures degrade into reactive filters lacking long-horizon coherence.

This paper introduces \textit{Stream Neural Networks (StNN)}, an execution paradigm designed for irreversible input streams. StNN operates through a stream-native execution algorithm, the \textit{Stream Network Algorithm (SNA)}, whose fundamental unit is the \textit{stream neuron}. Each stream neuron maintains a persistent temporal state that evolves continuously across inputs.

We formally establish three structural guarantees: (1) stateless mappings collapse under irreversibility and cannot encode temporal dependencies; (2) persistent state dynamics remain bounded under mild activation constraints; and (3) the state transition operator is contractive for $\lambda < 1$, ensuring stable long-horizon execution. 

Empirical phase-space analysis and continuous tracking experiments validate these theoretical results. Together, these findings demonstrate that persistent temporal state is not an optimization choice but a structural requirement for neural systems operating under irreversible data flow.

The execution principles introduced in this work form the foundational substrate upon which subsequent learning, memory, monitoring, enforcement, and containment systems in the StNN framework are constructed.
\end{abstract}

\section{Introduction}

Most neural learning systems are designed around epoch-based training, where a finite dataset is repeatedly revisited to optimize model parameters. This design implicitly assumes reversible computation: past inputs remain accessible, gradients can be recomputed, and errors can be corrected through replay. While effective in offline settings, these assumptions break down in environments where data arrives as an irreversible stream.

In many real-world systems—such as continuous sensing, control, interaction, or autonomous operation—inputs are transient and cannot be replayed. Under such conditions, conventional neural architectures degrade into reactive mappings that respond only to the current input, lacking the ability to accumulate and preserve temporal structure over extended horizons.

A common workaround is to approximate temporal continuity through sliding windows, truncated histories, or external memory buffers. However, these approaches reintroduce batch-like behavior, incur unbounded computational cost, or rely on assumptions about data availability that are incompatible with irreversible streams.

This work takes a different approach. Rather than adapting batch-based learning to stream environments, we reconsider the execution model itself. We ask a more fundamental question: \emph{what execution properties are required for a neural system to remain stable under irreversible input streams?}

We argue that persistent temporal state is not an optimization choice but a structural requirement. Without internal state that evolves continuously across inputs, a neural system cannot sustain coherent dynamics when data cannot be revisited. This paper introduces a stream-native execution paradigm that embeds temporal persistence directly into the neuron-level computation, eliminating the need for epochs, batches, or replay.

The focus of this paper is strictly on execution dynamics. We provide formal structural guarantees establishing the necessity, boundedness, and contractive stability of persistent temporal state under irreversible execution. Learning mechanisms, memory management strategies, and stability-enhanced execution variants are intentionally deferred to subsequent work.

\section{Stream Neural Networks}

\subsection{Definition of Stream Neural Networks}

A Stream Neural Network (StNN) is a neural execution system designed to operate under irreversible input streams, where each input is processed exactly once and cannot be revisited. Unlike conventional architectures that rely on dataset reuse or epoch-based iteration, StNN defines computation as a continuous, forward-only process.

Execution in an StNN is governed by the \textit{Stream Network Algorithm (SNA)}. SNA specifies how signals propagate through the network over time, how internal temporal state is maintained, and how outputs are produced at each step of the stream. Importantly, SNA defines execution semantics only; it does not prescribe learning rules, memory policies, or control logic.

Under SNA, computation proceeds as a sequence of time-indexed transformations:
\[
(x_t, s_{t-1}) \;\rightarrow\; (y_t, s_t),
\]
where $x_t$ denotes the current input, $s_{t-1}$ the prior internal state, $y_t$ the instantaneous output, and $s_t$ the updated state. Each step consumes the input irreversibly and advances the system state forward in time.

This execution model differs fundamentally from batch-based neural computation. There is no notion of epochs, mini-batches, or gradient replay. Past inputs influence future behavior only through the evolution of internal state, not through repeated access to stored data.

The StNN framework separates execution from higher-level system functions. Learning algorithms, memory management mechanisms, monitoring processes, enforcement logic, and external containment systems may operate on or around SNA-defined execution, but they do not alter its temporal semantics. As a result, all subsequent systems built within the StNN framework inherit a common stream-native execution substrate without redefining how time and irreversibility are handled.

Parameter updates, when present, are constrained to operate strictly on the forward stream without violating state continuity or introducing replay.

\subsection{Formal Definition of Irreversibility}

\begin{definition}[Irreversible Input Stream]
Let $\{x_0, x_1, \dots, x_t, \dots\}$ be a sequence of inputs.
An execution process is said to operate under \emph{irreversibility} if, for all $t$:

\begin{enumerate}
    \item Each input $x_t$ is processed exactly once.
    \item No past input $\{x_0, \dots, x_{t-1}\}$ is stored, replayed, or reconstructable.
    \item State evolution depends only on $(x_t, s_{t-1})$.
\end{enumerate}

Formally, for any $k > 0$:
\[
x_{t-k} \notin \mathcal{M}_t
\]
where $\mathcal{M}_t$ denotes the accessible memory at time $t$.
\end{definition}

\subsection{Relation to Spiking Neural Networks}

The term ``Stream Neural Network (StNN)'' may invite comparison with spiking neural networks due to the shared emphasis on temporal behavior. However, the proposed framework is fundamentally different in both representation and execution semantics.

Unlike spiking neural networks, which encode information through discrete spike events and explicit spike timing, the proposed StNN framework operates through continuous temporal state propagation across discrete stream steps under irreversible input streams. Temporal information is preserved implicitly via persistent internal state rather than through event-based signaling.

The system does not model biological spiking behavior, does not rely on spike timing or rate codes, and does not employ spike-driven learning rules such as spike-timing-dependent plasticity. Instead, StNN is designed as a stream-native execution framework in which persistent internal state is a structural requirement for stable operation under forward-only data flow.

Accordingly, although both approaches address temporal processing, they differ in motivation, mechanics, and failure modes. The StNN framework should therefore be understood as a persistent-state stream execution model rather than a spiking neural system.

\section{Stream Neuron Architecture}

The fundamental computational unit of a Stream Neural Network is the \textit{stream neuron}. Unlike conventional neurons that operate statelessly on individual inputs, a stream neuron maintains an internal temporal state that persists across time steps. This state enables the neuron to accumulate, retain, and evolve information under irreversible input streams.

Figure~\ref{fig:stream_neuron} illustrates the internal execution architecture of a stream neuron. The diagram highlights how current inputs are integrated with persistent temporal state, without reliance on external memory buffers or replay mechanisms.

\begin{figure}[h]
    \centering
    \includegraphics[width=0.9\linewidth]{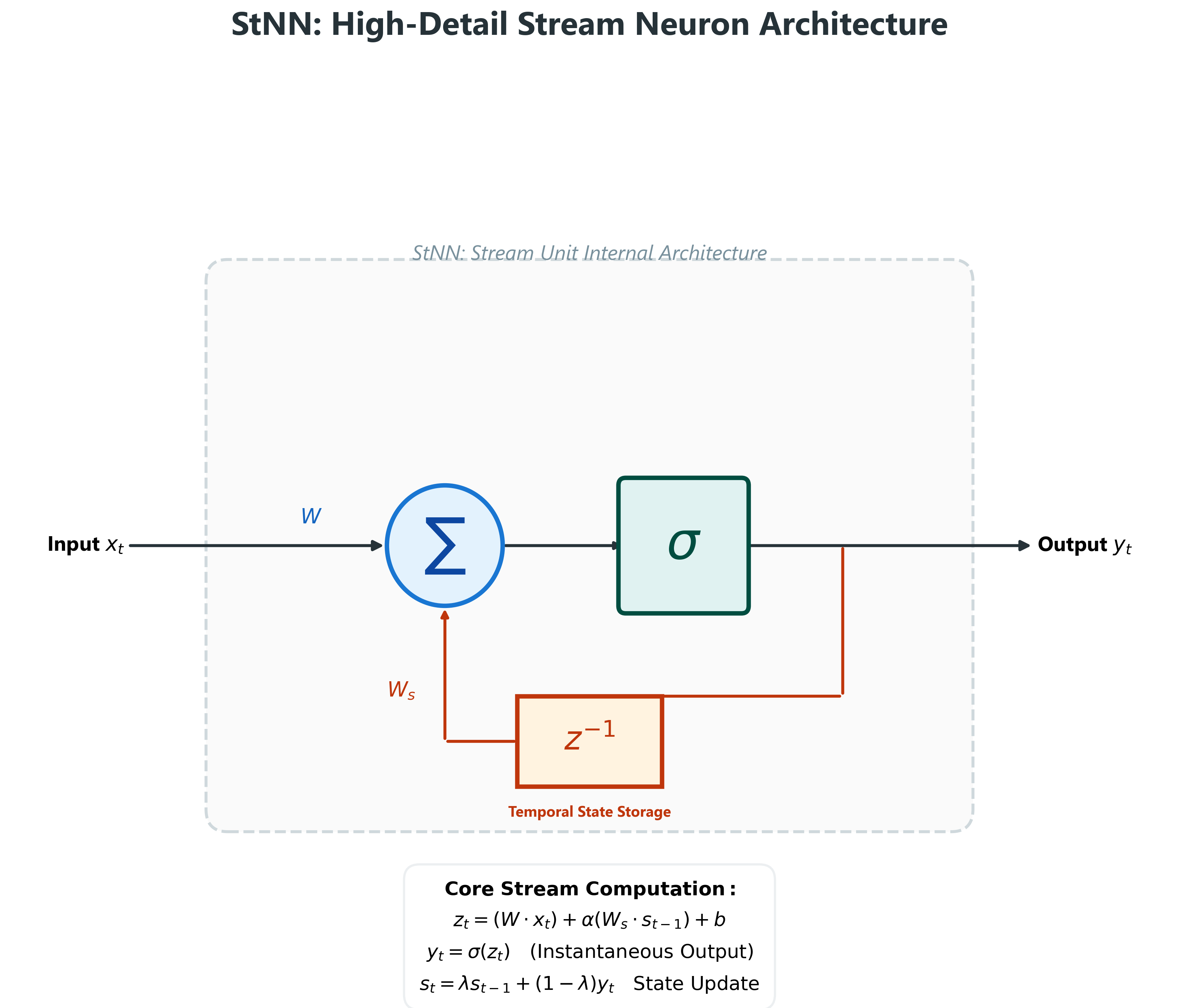}
    \caption{High-detail stream neuron architecture. Each neuron integrates the current input with a persistent temporal state stored across time steps, enabling irreversible stream execution without replay.}
    \label{fig:stream_neuron}
\end{figure}

At each time step $t$, a stream neuron receives an input vector $x_t$ and updates its internal state $s_t$ according to:
\begin{align}
z_t &= W x_t + \alpha W_s s_{t-1} + b, \\
y_t &= \sigma(z_t), \\
s_t &= \lambda s_{t-1} + (1 - \lambda) y_t,
\end{align}
where $W$ and $W_s$ are weight matrices, $b$ is a bias term, $\sigma(\cdot)$ is a nonlinear activation function, and $\lambda \in [0,1)$ controls temporal retention.

The internal state $s_t$ serves as the sole carrier of historical information. No past inputs are stored or revisited; temporal influence arises exclusively through state propagation. As a result, computation proceeds in constant time per neuron per step and advances irreversibly with the input stream.

The decay parameter $\lambda$ regulates the balance between responsiveness and persistence. Smaller values emphasize recent inputs, while larger values preserve long-horizon temporal influence. Crucially, this mechanism does not introduce external memory buffers or replay mechanisms. Temporal coherence emerges from the neuron’s internal dynamics rather than from dataset reuse.

The stream neuron architecture defines execution behavior only. Learning rules, memory management strategies, and stability enhancements operate independently of the neuron’s temporal update mechanism and are addressed in subsequent work. By embedding temporal persistence at the neuron level, StNN establishes a minimal and composable execution primitive suitable for continuous, irreversible computation.

\section{Structural Necessity of Persistent State}

\begin{theorem}[Stateless Collapse Under Irreversibility]
Let a stateless model be defined as:
\[
y_t = f(x_t; \theta)
\]
where $\theta$ is time-invariant and does not encode past inputs.

Under irreversible execution, the model cannot encode temporal dependencies beyond the current time step.
\end{theorem}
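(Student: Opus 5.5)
I first need to pin down what ``encode temporal dependencies beyond the current time step'' means, so that the statement becomes checkable. I will say that a model encodes a temporal dependency on lag $k>0$ if there exist two input streams $\{x_\tau\}$ and $\{x'_\tau\}$ with $x_t = x'_t$ but $x_{t-k}\neq x'_{t-k}$ for which the outputs differ, $y_t \neq y'_t$. Equivalently, $y_t$ is not determined by $x_t$ alone. With this definition, the theorem asserts that for a stateless model $y_t$ is a function of $x_t$ only. In particular, any target map $g(x_t, x_{t-k})$ that genuinely depends on its second argument cannot be realized.

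The argument is essentially a measurability/factorization argument. I will show by direct inspection that $y_t$ factors through $x_t$. By hypothesis $y_t = f(x_t;\theta)$, and $\theta$ is time-invariant and does not encode past inputs. Formally, $\theta$ is the same for every stream, so it is not a function of $(x_0,\dots,x_{t-1})$. The only remaining channel through which past inputs could reach $y_t$ is memory. The definition of an irreversible input stream closes this channel: $x_{t-k}\notin\mathcal{M}_t$ for every $k>0$, and past inputs are not reconstructable. The model also has no state variable $s_{t-1}$, which is what distinguishes it from the stream neuron of the previous section. Hence the complete information available to the computation at time $t$ is the pair $(x_t,\theta)$. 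Take two streams that agree at time $t$. They present identical arguments to $f$, so $y_t = f(x_t;\theta) = f(x'_t;\theta) = y'_t$, regardless of how the streams differ at earlier times. This gives the claimed collapse.

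To close, I will give a concrete witness showing that the limitation is real. Take the lag-one target $g(x_t,x_{t-1}) = x_{t-1}$, and take two streams with $x_{t-1}\neq x'_{t-1}$ but $x_t = x'_t$. Any stateless model outputs the same $y_t$ on both streams, so it must err on at least one of them. By contrast, the stream neuron's $s_{t-1}$ is a nonconstant function of $x_{t-1}$ whenever $\lambda<1$ and $\sigma$ is nonconstant, so it can carry this dependency.

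The main obstacle is not the computation but the rigor of the hypotheses. ``$\theta$ does not encode past inputs'' must be made precise so that it excludes $\theta$ acting as hidden state, for example through online parameter updates. I will treat $\theta$ as a constant independent of the realized stream, and remark that if $\theta$ were allowed to depend on history it would be persistent state in disguise. The theorem then reads as a tautological consequence of the information available at time $t$, which is exactly the structural point the paper intends.
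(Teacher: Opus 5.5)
Your proof is correct, but it formalizes ``encoding a temporal dependency'' differently from the paper.

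The paper argues via sensitivity and information. From $y_t = f(x_t;\theta)$ it asserts $\partial y_t/\partial x_{t-k} = 0$, passes to $P(y_t \mid x_t, x_{t-k}) = P(y_t \mid x_t)$, and concludes $I(y_t; x_{t-k} \mid x_t) = 0$. You instead define a lag-$k$ dependency as the ability to distinguish two streams that agree at time $t$. You then observe by direct substitution that $y_t$ factors through $x_t$, and exhibit an explicitly unrealizable target, $x_{t-1}$.

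Your route needs no differentiability of $f$, no continuum of inputs, and no probability model on the stream. It therefore covers discrete inputs and nonsmooth $f$, where the paper's derivative step is not available. It is also the fact that actually underwrites the paper's chain: the conditional independence holds because, given $x_t$, $y_t$ is the point mass at $f(x_t;\theta)$, not because a derivative vanishes. This gives $I(y_t; x_{t-k}\mid x_t)=0$ for every distribution on streams at once. The paper's version buys an information-theoretic phrasing that links to gradient signals and to quantitative notions of capacity; your pointwise statement implies this but does not spell it out.

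Your explicit treatment of $\theta$ as stream-independent (otherwise it is hidden state) makes precise a hypothesis the paper leaves informal. You may also note that in both arguments irreversibility and $\mathcal{M}_t$ do no real work once $y_t = f(x_t;\theta)$ with fixed $\theta$ is assumed.

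One small imprecision, outside the theorem itself, is in your closing remark. $s_{t-1}$ is a nonconstant function of $x_{t-1}$ only if, besides $\lambda<1$, the map $x \mapsto \sigma(Wx + \alpha W_s s_{t-2} + b)$ is nonconstant on the input domain. This requires in particular $W \neq 0$.
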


\begin{proof}
Since $y_t$ depends only on $x_t$, for any $k > 0$:
\[
\frac{\partial y_t}{\partial x_{t-k}} = 0.
\]

Thus the output is conditionally independent of past inputs given $x_t$:
\[
P(y_t \mid x_t, x_{t-k}) = P(y_t \mid x_t).
\]

Therefore,
\[
I(y_t; x_{t-k} \mid x_t) = 0.
\]

The model is a memoryless filter and therefore cannot encode temporal dependencies under irreversible execution.

\end{proof}

\section{Boundedness of Persistent Dynamics}

\begin{theorem}[Bounded State Dynamics]
Let the state update be defined as:
\[
s_t = \lambda s_{t-1} + (1-\lambda) y_t,
\]
where $\lambda \in [0,1)$ and $|y_t| \le M$ for all $t$.

If $|s_0| \le M$, then $|s_t| \le M$ for all $t \ge 0$.
\end{theorem}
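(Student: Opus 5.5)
The plan is a direct induction on $t$. The only fact needed is that the update $s_t = \lambda s_{t-1} + (1-\lambda) y_t$ is a convex combination of $s_{t-1}$ and $y_t$, because $\lambda \in [0,1)$ gives nonnegative weights $\lambda$ and $1-\lambda$ summing to one. The closed interval (or closed ball) of radius $M$ is convex, so a convex combination of two points in it stays in it. No result from earlier in the paper is required; the argument is self-contained and rests only on the triangle inequality.

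The base case $t=0$ is the hypothesis $|s_0| \le M$. For the inductive step, assume $|s_{t-1}| \le M$. Then
\[
|s_t| \le \lambda |s_{t-1}| + (1-\lambda)|y_t| \le \lambda M + (1-\lambda) M = M .
\]
This uses the triangle inequality, the nonnegativity of $\lambda$ and $1-\lambda$, the inductive hypothesis, and the standing bound $|y_t| \le M$. By induction, $|s_t| \le M$ for all $t \ge 0$.

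As a cross-check, and to expose the dependence on $\lambda$, I would unroll the recursion into
\[
s_t = \lambda^t s_0 + (1-\lambda)\sum_{k=1}^{t} \lambda^{t-k} y_k .
\]
The coefficients satisfy $\lambda^t + (1-\lambda)\sum_{k=1}^{t}\lambda^{t-k} = \lambda^t + (1-\lambda^t) = 1$, so the same bound follows at once. This form also shows that the memory of $s_0$ decays geometrically, which anticipates the contractivity result.

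I do not expect a real obstacle. The one point that needs care is the meaning of $|\cdot|$ when $s_t$ and $y_t$ are vectors, since the stream neuron equations allow vector states. I would interpret $|\cdot|$ as any norm, or as the componentwise absolute value with a componentwise bound. The triangle inequality and absolute homogeneity, which are all the induction uses, hold in either case, so the proof goes through verbatim. I would also remark that the hypothesis $|y_t| \le M$ is automatic for bounded activations such as $\tanh$ (with $M=1$) or the sigmoid. This is what the abstract calls a mild activation constraint, so the theorem needs no assumption on $W$, $W_s$, $\alpha$, or $b$.
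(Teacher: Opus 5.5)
Your proof is correct and follows the paper's own argument: induction on $t$, with the inductive step $|s_t| \le \lambda|s_{t-1}| + (1-\lambda)|y_t| \le \lambda M + (1-\lambda)M = M$ obtained from the triangle inequality. The explicit base case, the unrolled convex-combination check, and the remark about vector norms are all valid additions; the paper leaves these points implicit.
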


\begin{proof}
Assume $|s_{t-1}| \le M$.

By triangle inequality:
\[
|s_t| \le \lambda |s_{t-1}| + (1-\lambda)|y_t|.
\]

Substituting bounds:
\[
|s_t| \le \lambda M + (1-\lambda)M = M.
\]

Thus the bound holds for all $t$ by induction.
\end{proof}

\begin{lemma}[Contraction Property]
Let two state trajectories evolve under identical input sequence $\{y_t\}$ with
\[
s_t = \lambda s_{t-1} + (1-\lambda) y_t,
\]
where $\lambda \in [0,1)$.

Then the state transition map is a contraction with respect to the previous state:
\[
\| s_t - s'_t \| \le \lambda \| s_{t-1} - s'_{t-1} \|.
\]
\end{lemma}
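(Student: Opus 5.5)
The plan is to subtract the two update equations directly. Because both trajectories are driven by the same sequence $\{y_t\}$, the input term should cancel exactly, and the contraction factor is then read off from the homogeneity of the norm. Concretely, I would write
\[
s_t = \lambda s_{t-1} + (1-\lambda) y_t, \qquad s'_t = \lambda s'_{t-1} + (1-\lambda) y_t,
\]
and subtract to obtain $s_t - s'_t = \lambda (s_{t-1} - s'_{t-1})$. Taking norms and using absolute homogeneity, $\|\lambda v\| = |\lambda|\,\|v\| = \lambda \|v\|$ for $\lambda \in [0,1)$, gives the claimed inequality, in fact with equality.

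The step that needs care is the interpretation of ``identical input sequence $\{y_t\}$''. In the full stream neuron, $y_t = \sigma(W x_t + \alpha W_s s_{t-1} + b)$ depends on $s_{t-1}$. If the two trajectories instead share the same $\{x_t\}$, then $y_t$ and $y'_t$ generally differ, and the cancellation fails. I will therefore state explicitly that the lemma treats $y_t$ as an exogenous driving signal common to both trajectories, exactly as in the statement of the Bounded State Dynamics theorem. Under that reading, the map $T_t(s) = \lambda s + (1-\lambda) y_t$ is affine with linear part $\lambda I$, so it is a contraction with Lipschitz constant $\lambda < 1$ in any norm.

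As a remark, and not as part of the lemma, I would indicate how the coupled case would go. Suppose $\sigma$ is $L$-Lipschitz. Then
\[
\|y_t - y'_t\| \le L |\alpha| \, \|W_s\| \, \|s_{t-1} - s'_{t-1}\|,
\]
and the same subtraction gives the contraction factor $\lambda + (1-\lambda) L |\alpha| \|W_s\|$. This factor is below $1$ whenever $L|\alpha|\|W_s\| < 1$.

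Finally, iterating the one-step bound gives $\|s_t - s'_t\| \le \lambda^t \|s_0 - s'_0\|$. Hence trajectories forget their initial conditions geometrically, and by the Banach fixed-point argument applied to the composed maps they converge to a unique input-determined trajectory. This is the ``stable long-horizon execution'' claimed in the abstract.
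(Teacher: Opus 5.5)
Your proof is correct and is the same as the paper's: with a common $y_t$, the input term cancels on subtraction, giving $s_t - s'_t = \lambda(s_{t-1} - s'_{t-1})$ and hence the bound, in fact with equality. Your explicit caveat that $y_t$ must be treated as an exogenous signal shared by both trajectories (rather than recomputed as $\sigma(Wx_t + \alpha W_s s_{t-1} + b)$) is a useful clarification the paper leaves implicit, and your coupled-case factor $\lambda + (1-\lambda)L|\alpha|\,\|W_s\|$ is a correct extension beyond what the lemma claims.
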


\begin{proof}
Consider two states $s_{t-1}, s'_{t-1}$ under identical input $y_t$:
\[
s_t - s'_t = \lambda (s_{t-1} - s'_{t-1}).
\]

Taking norm:
\[
\lVert s_t - s'_t \rVert = \lambda \lVert s_{t-1} - s'_{t-1} \rVert.
\]

Since $\lambda < 1$, the mapping is a contraction.
\end{proof}

\begin{tcolorbox}[colback=gray!5,colframe=black,title=Formal Guarantees Summary]

Under irreversible execution, Stream Neural Networks satisfy the following structural properties:

\begin{enumerate}
    \item \textbf{Necessity of Persistent State:} Stateless mappings collapse to memoryless filters and cannot encode temporal dependencies (Theorem~1).
    \item \textbf{Bounded State Dynamics:} Persistent state evolution remains confined within a bounded attractor region when activations are bounded (Theorem~2).
    \item \textbf{Contractive Stability:} The state transition operator is contractive for $\lambda < 1$, ensuring stable long-horizon execution.
\end{enumerate}

These guarantees are structural and independent of learning rules, memory governance policies, or optimization procedures.

\end{tcolorbox}

\section{Phase Space Dynamics of Stream Execution}

To examine the role of persistent temporal state in stream execution, we analyze the system’s behavior in phase space. Rather than evaluating predictive accuracy, this analysis focuses on qualitative stability properties under irreversible input streams.

Figure~\ref{fig:phase_space} compares phase space trajectories of a stream neuron under two execution regimes: with temporal state enabled and with temporal state disabled. When state is enabled, execution converges to a stable limit cycle, indicating sustained temporal dynamics. In contrast, disabling temporal state causes trajectories to collapse into a point attractor.

\begin{figure}[H]
    \centering
    \includegraphics[width=0.7\linewidth]{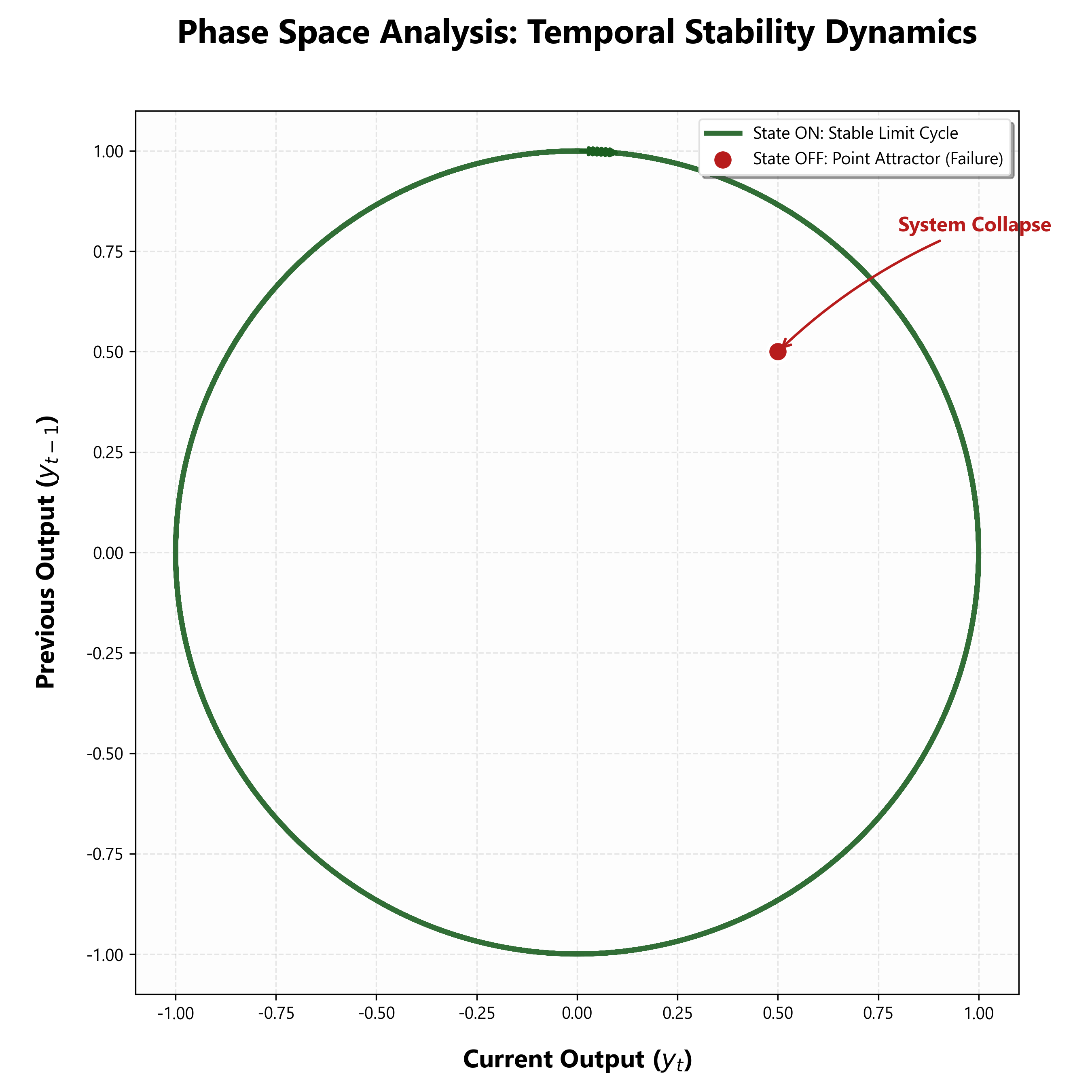}
    \caption{Phase space analysis of stream execution. State-enabled dynamics converge to a stable limit cycle, while state-disabled execution collapses to a point attractor, indicating loss of temporal coherence.}
    \label{fig:phase_space}
\end{figure}

The point attractor observed in state-disabled execution represents a structural failure mode. In accordance with Theorem~1, removing persistent state reduces execution to a memoryless mapping $y_t = f(x_t; \theta)$ under irreversibility. Such a mapping cannot encode temporal dependencies and therefore collapses to a reactive filter. This behavior reflects a fundamental incompatibility between stateless execution and irreversible data streams.

By contrast, the emergence of a limit cycle under state-enabled execution indicates bounded yet persistent dynamics consistent with Theorem~2. The system neither diverges nor collapses, allowing temporal structure to be maintained indefinitely without replay or external memory. Importantly, this behavior arises solely from execution dynamics and does not depend on learning rules, optimization procedures, or parameter adaptation.

These phase space properties demonstrate that persistent temporal state is not an optional enhancement but a structural requirement for stable stream execution. Any neural system operating under irreversible input streams must preserve internal state across time to avoid dynamical collapse.

\section{Temporal Retention Dynamics}

Persistent temporal state in a stream neuron is regulated through a decay mechanism that controls how past information influences future execution. This mechanism enables continuous adjustment between short-horizon responsiveness and long-horizon temporal coherence without introducing external memory buffers or replay.

Figure~\ref{fig:temporal_decay} illustrates the evolution of internal state retention under different decay factors $\lambda$. Smaller values of $\lambda$ result in rapid decay, emphasizing recent inputs, while larger values preserve historical influence over extended time horizons.

Because $\lambda < 1$, historical influence decays exponentially, ensuring bounded contribution from distant past states.

\begin{figure}[H]
    \centering
    \includegraphics[width=0.9\linewidth]{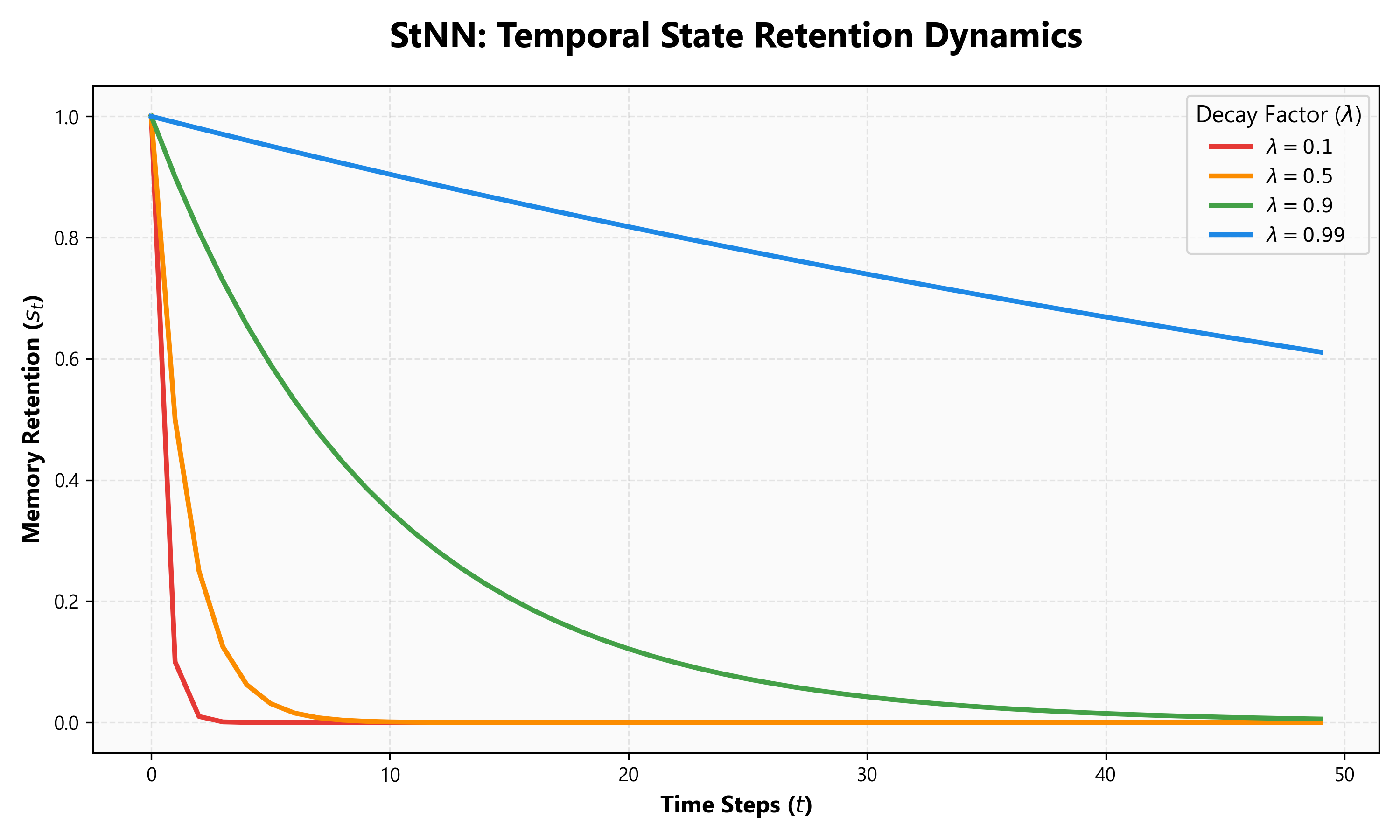}
    \caption{Temporal state retention under different decay factors $\lambda$. Larger values maintain long-horizon influence, while smaller values favor rapid responsiveness.}
    \label{fig:temporal_decay}
\end{figure}

Importantly, decay operates locally within each stream neuron and does not require storage of past inputs. Temporal influence is carried forward exclusively through the internal state variable, ensuring constant-time execution per step regardless of stream length.

The presence of a tunable decay parameter demonstrates that temporal persistence in StNN is neither fixed nor unbounded. Instead, it provides a controlled mechanism for regulating execution dynamics under irreversible input streams. This control is intrinsic to the execution model and does not rely on learning rules, memory management policies, or optimization procedures.

These dynamics establish temporal retention as a continuous execution property rather than a discrete architectural feature. As a result, stream execution can be adapted to diverse temporal regimes while preserving irreversibility and bounded per-step computation.

\section{Continuous Tracking Under Irreversible Input}

To evaluate execution behavior under sustained, irreversible input streams, we examine the ability of stream neurons to maintain continuous temporal tracking over time. This analysis focuses on execution coherence rather than predictive accuracy and characterizes baseline stream behavior under persistent temporal state.

Figure~\ref{fig:tracking} illustrates continuous tracking behavior under irreversible input. When temporal state is enabled, the stream neuron maintains coherence across time steps and follows the underlying signal without replay or buffering. For comparison, execution behavior without temporal state is shown to highlight the absence of sustained tracking capability.

\begin{figure}[H]
    \centering
    \includegraphics[width=0.9\linewidth]{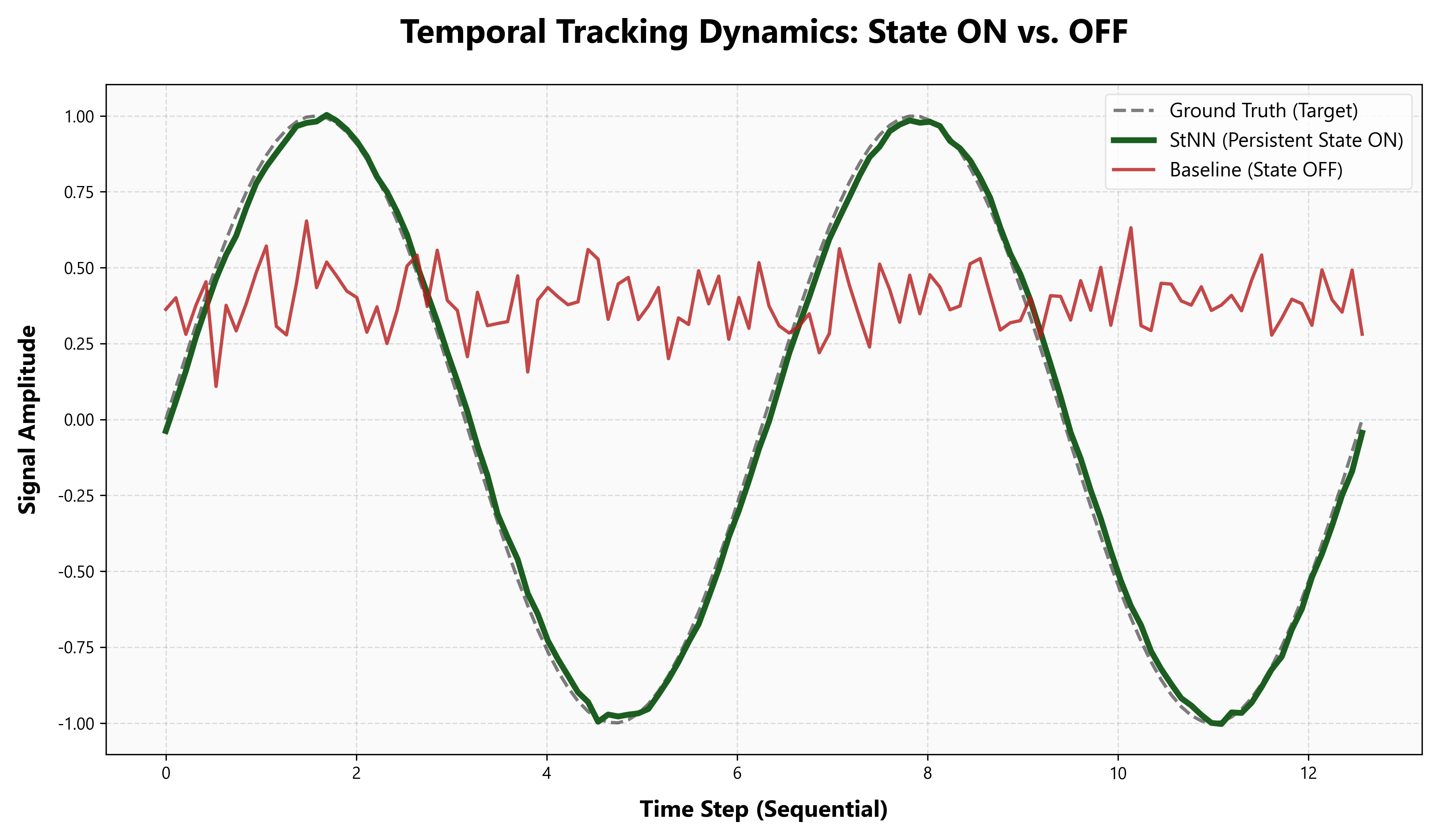}
    \caption{Baseline continuous tracking under irreversible input streams. State-enabled execution sustains coherent temporal dynamics, while execution without persistent state fails to preserve temporal structure.}
    \label{fig:tracking}
\end{figure}

The loss of tracking observed without temporal state is not attributable to insufficient capacity or parameter selection. Rather, it reflects the absence of a mechanism for accumulating temporal information across inputs. Without persistent state, each input is processed independently, preventing meaningful temporal integration under irreversible conditions.

By contrast, state-enabled execution propagates internal state forward in time, allowing coherent tracking behavior to be sustained indefinitely. This demonstrates that persistent temporal state is a foundational requirement for stable execution under continuous streams.

\section{Scope and Limitations}

The results presented herein establish structural guarantees on execution semantics under irreversibility, including bounded persistent state evolution and contraction of the state transition operator. No claims are made regarding learning convergence, statistical optimality, or bounded memory policies.

In particular, this paper does not specify learning algorithms, parameter update rules, or training procedures. While execution is defined through the Stream Network Algorithm (SNA), the mechanisms by which parameters are adjusted over time are deferred to subsequent work. Similarly, this paper does not introduce memory management policies, bounded storage systems, or pruning mechanisms beyond the neuron-local temporal state described herein.

Stability-enhanced execution variants, monitoring processes, enforcement logic, and external containment mechanisms are also outside the scope of this paper. Such systems may operate on or around the execution substrate established here, but they do not alter its temporal semantics. The purpose of this work is to define the minimal execution properties required for neural computation under irreversible streams, independent of higher-level system functions.

As a result, the conclusions of this paper should be interpreted as statements about execution structure rather than complete system behavior. Subsequent papers in the StNN framework extend this foundation by addressing learning, bounded memory, monitoring, enforcement, and containment while preserving the execution constraints established here.

\section{Conclusion}

This paper introduced Stream Neural Networks (StNN) as an execution paradigm for neural systems operating under irreversible input streams. By embedding persistent temporal state directly into the neuron-level computation, StNN eliminates reliance on epochs, batch processing, and replay while maintaining stable execution over unbounded time horizons.

Through phase space analysis, temporal retention dynamics, and continuous tracking experiments, we demonstrated that persistent internal state is not an optimization choice but a structural requirement. Disabling temporal state causes execution to collapse into a memoryless regime incompatible with irreversible data flow, whereas state-enabled execution exhibits bounded, coherent dynamics without external memory or replay.

The results presented here establish the execution substrate of the StNN framework. Subsequent work builds upon this foundation by addressing complementary aspects of stream-native systems. StNN-1 empirically isolates the necessity of temporal state through controlled ablation studies. StNN-2 completes the execution framework by introducing bounded learning and memory mechanisms while preserving the stream execution semantics defined in this paper.

Higher-level systems for monitoring, enforcement, and containment developed in subsequent work

These systems inherit the irreversible temporal semantics of StNN without modifying its core execution dynamics.
Accordingly, the present work defines the minimal execution principles upon which more complex stream-native systems can be constructed.

\end{document}